\documentclass[10pt,twocolumn,letterpaper]{article}

\usepackage{cvpr}              

\definecolor{cvprblue}{rgb}{0.21,0.49,0.74}
\usepackage[pagebackref,breaklinks,colorlinks,allcolors=cvprblue]{hyperref}

\usepackage[utf8]{inputenc} 
\usepackage[T1]{fontenc}    
\usepackage{url}            
\usepackage{booktabs}       
\usepackage{amsfonts}       
\usepackage{nicefrac}       
\usepackage{microtype}      
\usepackage{xcolor}         

\usepackage{bbm}
\usepackage{bm}

\usepackage{wrapfig}

\usepackage{times}
\usepackage{graphicx} 

\usepackage{amsmath}
\usepackage{amssymb}

\usepackage{multirow}

\definecolor{RoyalBlue}{RGB}{65,105,225}
\definecolor{RedOrange}{RGB}{255,69,0}

\usepackage{makecell}
\usepackage{color, colortbl}
\definecolor{Gray}{gray}{0.9}
\definecolor{demphcolor}{RGB}{144,144,144}
\definecolor{airforceblue}{rgb}{0.36, 0.54, 0.66}

\usepackage{subcaption}

\usepackage{microtype}
\usepackage{graphicx}
\usepackage{booktabs} 
\usepackage{hyperref} 

\usepackage{amsmath}
\usepackage{amssymb}
\usepackage{mathtools}
\usepackage{amsthm}

\usepackage{algorithm}
\usepackage{algpseudocode}

\usepackage[capitalize,noabbrev]{cleveref}

\theoremstyle{plain}

\newtheorem{theorem}{Theorem}[section]
\newtheorem{proposition}[theorem]{Proposition}

\theoremstyle{definition}
\newtheorem{definition}[theorem]{Definition}
\newtheorem{assumption}[theorem]{Assumption}
\theoremstyle{remark}
\newtheorem{remark}[theorem]{Remark}

\newcommand{\bx}{\mathbf{x}}
\newcommand{\by}{\mathbf{y}}
\newcommand{\btheta}{\boldsymbol{\theta}}
\newcommand{\loss}{l}          
\newcommand{\batchloss}{L}      
\newcommand{\score}{s}          
\newcommand{\batch}{\mathcal{B}} 
\newcommand{\dataset}{\mathcal{D}} 
\newcommand{\reals}{\mathbb{R}}   
\newcommand{\naturals}{\mathbb{N}} 

\usepackage{enumitem}
\hypersetup{
    colorlinks=true,    
    linkcolor=blue,     
    urlcolor=blue,      
    hidelinks          
}

\usepackage{color, colortbl}
\definecolor{Gray}{gray}{0.9}
\definecolor{demphcolor}{RGB}{144,144,144}
\definecolor{airforceblue}{rgb}{0.36, 0.54, 0.66}

\usepackage{caption}
\usepackage{subcaption}

\usepackage{makecell} 

\def\paperID{} 
\def\confName{CVPR}
\def\confYear{2026}

\title{Batch Loss Score for Dynamic Data Pruning}

\author{
Qing Zhou$^{1}$,
Bingxuan Zhao$^{1}$,
Tao Yang$^{1}$,
Hongyuan Zhang$^{2}$,
Junyu Gao$^{1}$,
Qi Wang$^{1}$\thanks{Corresponding author.}
\\
 $^{1}$Northwestern Polytechnical University \\
 $^{2}$The University of Hong Kong
\\
\small{\texttt{\{chautsing, bxuanzhao202, ytao9464, hyzhang98, gjy3035, crabwq\}@gmail.com}}}

\begin{document}

\maketitle

    \begin{abstract}
        Dynamic data pruning accelerates deep learning by selectively omitting less informative samples during training. While per-sample loss is a common importance metric, obtaining it can be challenging or infeasible for complex models or loss functions, often requiring significant implementation effort. This work proposes the Batch Loss Score (BLS), a computationally efficient alternative using an Exponential Moving Average (EMA) of readily available batch losses to assign scores to individual samples. We frame the batch loss, from the perspective of a single sample, as a noisy measurement of its scaled individual loss, with noise originating from stochastic batch composition. It is formally shown that the EMA mechanism functions as a first-order low-pass filter, attenuating high-frequency batch composition noise. This yields a score approximating the smoothed and persistent contribution of the individual sample to the loss, providing a theoretical grounding for BLS as a proxy for sample importance. BLS demonstrates remarkable code integration simplicity (\textbf{three-line injection}) and readily adapts existing per-sample loss-based methods (\textbf{one-line proxy}). Its effectiveness is demonstrated by enhancing two such methods to losslessly prune \textbf{20\%-50\%} of samples across \textit{14 datasets}, \textit{11 tasks} and \textit{18 models}, highlighting its utility and broad applicability, especially for complex scenarios where per-sample loss is difficult to access. Code is available at \url{https://github.com/mrazhou/BLS}.
        \end{abstract}

\section{Introduction}
\label{sec:introduction}

The training of state-of-the-art deep neural networks often involves massive datasets and extensive computation, posing significant resource challenges \citep{radford2019language, dosovitskiy2020image,VPN, PiNI, ImageNet,MSCOCO1}. Dynamic data pruning techniques \citep{InfoBatch, seta} aim to mitigate this burden by improving training efficiency. These methods \citep{Coleman2020Selection, mindermann2022prioritized, ucb, he2024large} dynamically identify and selectively skip samples presumed less informative at different training stages, thereby focusing computational resources.

\begin{figure}
    \centering
    \includegraphics[width=0.5\textwidth]{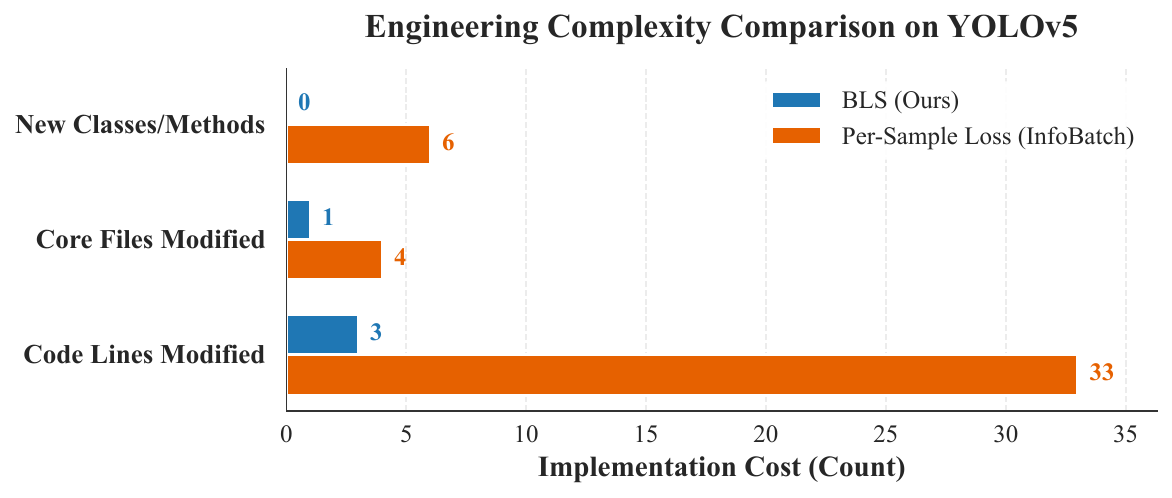}
    \caption{Score acquisition complexity (excluding scheduling logic). BLS: 3 line | InfoBatch: 33+ lines + intrusive modifications.}
    \vspace{-10pt}
    \label{fig:head_cost}
\end{figure}

A dominant strategy leverages per-sample loss $\loss_i(t)$ to gauge sample importance \citep{loshchilov2015online, paul2021deep,he2024large,ucb,InfoBatch,seta}, driven by the intuition linking higher loss to more informative samples. \textbf{However, directly accessing $\loss_i(t)$ presents significant practical barriers}. Standard deep learning pipelines are highly optimized around computing the \textit{mean batch loss}, $\batchloss(\batch_t, t)$, for gradient updates, making the recovery of individual $\loss_i(t)$ values post-aggregation non-trivial. This challenge is exacerbated by modern frameworks \citep{paszke2019pytorch} that abstract loss calculations and by complex objectives (e.g., multi-component losses in object detection \citep{yolo}) where defining and disentangling a representative per-sample scalar from the aggregate requires deep, task-specific knowledge and modifications \citep{yolo, usb, carion2020end, long2015fully, chen2017deeplab}. Thus, while $\batchloss(\batch_t, t)$ is ubiquitously available, the per-sample $\loss_i(t)$ remains largely inaccessible in many practical scenarios. This fundamental inaccessibility—spanning both engineering friction and conceptual difficulties—forms a critical bottleneck, significantly hindering the development and deployment of powerful loss-aware dynamic training strategies. Consequently, developing reliable methods to infer sample importance \textit{without} direct per-sample loss dependence is vital for advancing more adaptable and resource-efficient deep learning.

To circumvent these challenges, we introduce the Batch Loss Score (BLS), a novel mechanism \textbf{\textit{inferring sample importance solely from the readily available mean batch loss}}. BLS leverages the unique temporal sequence of batch losses encountered by each sample. It maintains per-sample scores, updated via an Exponential Moving Average (EMA) only when a sample is present in the current batch, incorporating that batch's mean loss. This sample-specific, conditional EMA update allows each score to reflect a unique participation history. As we demonstrate theoretically, this process inherently filters high-frequency noise stemming primarily from stochastic batch composition. The resulting smoothed score thus serves as a robust proxy for a sample's persistent loss contribution, achieving this with negligible computational overhead and exceptional implementation simplicity \textit{(requiring only 3 lines of code, detailed in supplementary material \cref{appendix:difficulty_per_sample_loss,sec:bls_simplicity_integration})}, facilitating easy integration into existing workflows. As shown in Figure~\ref{fig:head_cost}, BLS is significantly more simpler than existing methods, requiring only 3 lines of code to implement, while InfoBatch requires 33+ lines of code and intrusive modifications to the training pipeline.

The contributions of this paper are threefold.
\textbf{First}, we introduce BLS, a practical and efficient method for scoring sample importance using only mean batch losses.
\textbf{Second}, we provide a rigorous theoretical justification, analyzing BLS through a signal-processing lens and proving its noise-filtering properties under the frequency separation hypothesis.
\textbf{Third}, we conduct extensive empirical validation across diverse tasks and architectures, showing that BLS achieves losslessly training speedups when integrated into existing per-sample loss-based pruning methods, confirming its efficacy as a proxy, robust generalization, and utility without requiring per-sample loss access.
\section{Related Work}
\label{sec:related_work}
\textbf{Dynamic Data Selection.}
The idea of focusing computation on informative samples has a rich history, dating back to curriculum learning concepts \cite{bengio2009curriculum}, where models are initially trained on easier samples. Modern dynamic data selection methods automate this process throughout training. Many approaches \cite{katharopoulos2018not, mindermann2022prioritized, toneva2018empirical} aim to identify and prioritize hard or informative samples.
For instance, InfoBatch \cite{InfoBatch} implements soft pruning of samples exhibiting lower-than-average loss and corrects for the induced bias by rescaling gradients. SeTa \cite{seta} utilizes curriculum learning to select high-value samples in an easy-to-hard progression, achieving lossless acceleration during large-scale data training.
These frameworks often rely on specific importance metrics, such as per-sample loss, to guide the selection. BLS contributes to this area by proposing a novel scoring mechanism that can be readily integrated into existing dynamic selection frameworks that rely on sample scores.

\textbf{Sample Importance Metrics.}
Various metrics \cite{Coleman2020Selection,sener2017active,paul2021deep,iyer2021submodular} have been proposed to quantify sample importance for data selection. For dynamic data selection, per-sample loss is arguably the intuitive and frequently used metric \cite{loshchilov2015online, toneva2018empirical,InfoBatch,seta}, based on the premise that higher loss indicates greater difficulty or informativeness. Gradient norms \cite{katharopoulos2018not} or related quantities like gradient variance \cite{alain2015variance} provide another perspective, measuring the potential impact of a sample on parameter updates.
A primary challenge common to many of these metrics, particularly per-sample loss, is implementation complexity associated with obtaining them for every sample at frequent intervals during training. BLS differentiates itself by estimating sample importance indirectly through the temporal dynamics of the easily accessible batch loss, offering a significant advantage in terms of computational and implementation overhead.

\textbf{Leveraging Batch Statistics.}
While most selection methods focus on per-sample properties, some research utilizes batch-level statistics for other purposes, such as adaptive optimization (e.g., adaptive batch size methods \cite{smith2017don}). However, using temporal statistics of batch loss specifically to derive individual sample scores for data selection appears less explored. BLS is novel in its specific use of EMA-smoothed batch loss history as a direct proxy for individual sample importance over time. It exploits the readily available batch loss signal in a new way, leveraging its temporal dynamics through filtering to overcome the lack of immediate per-sample information.
\section{Methodology: Batch Loss Score (BLS)}
\subsection{Preliminaries and Problem Setup}
\label{sec:preliminaries}

Let $\dataset = \{(\bx_i, \by_i)\}_{i=1}^N$ be the training dataset of size $N$, $f(\cdot; \btheta)$ denote the model parameterized by $\btheta \in \reals^d$, and $\loss(f(\bx; \btheta), \by)$ be the per-sample loss function. At training iteration $t$, the model parameters are $\btheta_t$. The instantaneous loss for sample $i$ is $\loss_i(t) \coloneqq \loss(f(\bx_i; \btheta_t), \by_i)$.

Training typically involves sampling mini-batches $\batch_t \subset \dataset$ of size $B = |\batch_t|$ at each iteration $t$. The mean batch loss, used for gradient computation, is:
\begin{equation}
    \batchloss(\batch_t, t) = \frac{1}{B} \sum_{j \in \batch_t} \loss_j(t)
    \label{eq:batch_loss_def_final}
\end{equation}
Model parameters $\btheta_t$ are updated based on the gradient of $\batchloss(\batch_t, t)$ or a related objective, e.g., $\btheta_{t+1} = \btheta_t - \eta \nabla_{\btheta} \batchloss(\batch_t, t)|_{\btheta = \btheta_t}$.

\begin{definition}[Dynamic Data Pruning]
Dynamic data pruning is a process where, at certain steps $t$, an importance score $\score_i(t)$ is assigned to each sample $i \in \dataset$. Based on these scores, a subset $\dataset_t' \subseteq \dataset$ is selected for next training cycle (e.g., by pruning samples with $\score_i(t)$ below a threshold~\cite{InfoBatch}, or selecting samples within a specific difficulty window~\cite{seta}). Samples in $\dataset \setminus \dataset_t'$ are temporarily pruned for the next cycle.
\end{definition}

Our objective is to justify using a score derived solely from the sequence of observed $\batchloss(\batch_t, t)$ values as a meaningful proxy for the unobserved (or expensive) $\loss_i(t)$ for data pruning decisions.

\subsection{Per-Sample Score Estimation}
\label{sec:bls_method}

To overcome the lack of sample differentiation in the raw $\batchloss(\batch_t, t)$ and capture individual sample characteristics over time, we introduce the Batch Loss Score (BLS). Each sample $i \in \dataset$ is associated with a score $\score_i(t)$, initialized (e.g., $\score_i(0) = \batchloss(\batch_0, 0)$). At each training step $t$, after computing $\batchloss(\batch_t, t)$, the scores are updated based on batch participation:

\begin{definition}[Batch Loss Score]
The BLS score $\score_i(t)$ for sample $i$ at step $t$ is updated via EMA:
\begin{equation}
    \score_i(t) = \begin{cases} \alpha \score_i(t-1) + (1-\alpha) \batchloss(\batch_t, t) & \text{if } i \in \batch_t \\ \score_i(t-1) & \text{if } i \notin \batch_t \end{cases}
    \label{eq:ema_update_final}
\end{equation}
where $\alpha \in (0, 1)$ is the EMA decay factor, controlling the memory of the score.
\end{definition}

The crucial aspect of \cref{eq:ema_update_final} is that $\score_i(t)$ only incorporates information from batches $\batch_\tau$ where $i \in \batch_\tau$. This allows $\score_i(t)$ to accumulate a unique temporal history for sample $i$, distinct from other samples $j \neq i$.

\subsection{Seamless Proxy Integration}
BLS functions as a transparent proxy for existing dynamic pruning frameworks (e.g., InfoBatch, SeTa). Instead of intrusive per-sample loss extraction, BLS calculates $s_i(t)$ via the mean batch loss and feeds it directly into the framework's sampler as a drop-in replacement for the exact loss $l_i(t)$. Consequently, the downstream pruning algorithm remains completely agnostic to the score's origin, requiring zero modifications to its core scheduling logic or hyperparameters.

\section{Theoretical Analysis: BLS as Noise-Filtered Smoothed Loss}
\label{sec:analysis_final}

We first decomposes the mean batch loss $\batchloss(\batch_t, t)$ into a sample's scaled loss (signal) and batch composition noise (\cref{sec:loss_decomposition}). Then show BLS's Exponential Moving Average acts as a low-pass filter, attenuating this noise to estimate the sample's persistent loss contribution (\cref{sec:ema_filtering}).

\subsection{Decomposition of Batch Loss}
\label{sec:loss_decomposition}
Consider a specific sample $i$ and a batch $\batch_t$ containing it ($i \in \batch_t$). We can decompose the mean batch loss to isolate the direct contribution of $\loss_i(t)$:
\begin{equation}
    \batchloss(\batch_t, t) = \underbrace{\frac{1}{B} \loss_i(t)}_{\text{Scaled Signal } (\mathcal{S}_i)} + \underbrace{\frac{1}{B} \sum_{j \in \batch_t, j \neq i} \loss_j(t)}_{\text{Batch Composition Noise } (\mathcal{N}_i)}
    \label{eq:loss_decomp_final}
\end{equation}
This decomposition is central to our analysis. It separates the component directly related to sample $i$'s loss (the \textit{signal}, albeit scaled) from the variability induced by the other samples randomly co-selected in the batch (the \textit{noise}).

\begin{definition}[Batch Composition Noise]
For $i \in \batch_t$, the batch composition noise $N_i(\batch_t, t)$ relative to sample $i$ is:
\begin{equation}
    N_i(\batch_t, t) \coloneqq \frac{1}{B} \sum_{j \in \batch_t, j \neq i} \loss_j(t).
\end{equation}
This term represents the average loss of the other $B-1$ samples accompanying $i$ in batch $\batch_t$.
\end{definition}
In particular, $N_i(\batch_t, t)$ varies stochastically across different steps $t$ where $i$ is selected, primarily due to the random sampling of the set $\batch_t \setminus \{i\}$. The sequence of these noise terms for sample $i$ constitutes a major source of high-frequency fluctuation that the EMA aims to filter.

\subsection{EMA as a Filtering Mechanism}
\label{sec:ema_filtering}
Let $i \in \dataset$, $\{t_k\}_{k \in \naturals}$ denote the sequence of training steps where $i$ is included in the sampled batch ($i \in \batch_{t_k}$), and $\mathcal{L}_i = \{\batchloss(\batch_{t_k}, t_k)\}_{k=1}^\infty$ be the sequence of batch losses observed by sample $i$. Based on \cref{eq:loss_decomp_final}, we define the corresponding sequences for the scaled signal and noise:
$\mathcal{S}_i = \{\frac{1}{B}\loss_i(t_k)\}_{k=1}^\infty$ and $\mathcal{N}_i = \{N_i(\batch_{t_k}, t_k)\}_{k=1}^\infty$. By definition, $\mathcal{L}_i[k] = \mathcal{S}_i[k] + \mathcal{N}_i[k]$ for all $k \ge 1$.

The BLS update rule (\cref{eq:ema_update_final}) defines a discrete-time filtering operation on the subsequence of steps $\{t_k\}$. Let $s_i^{(k)} \coloneqq s_i(t_k)$ be the score of sample $i$ immediately after its $k$-th update (at step $t_k$). Ignoring the precise global timing $t_k$ and focusing on the update index $k$ for sample $i$, then:
\begin{equation}
    s_i^{(k)} = \alpha s_i^{(k-1)} + (1-\alpha) \mathcal{L}_i[k], \quad k \ge 1
    \label{eq:ema_indexed_final}
\end{equation}
with initial condition $s_i^{(0)} = s_i(t_0)$ (the score before the first update at $t_1$). This is the standard form of a first-order infinite impulse response (IIR) filter, representing a linear time-invariant (LTI) system \cite{oppenheim1999discrete} operating on the observation sequence $\mathcal{L}_i$.

\begin{assumption}[Frequency Separation Hypothesis]
\label{ass:frequency_final}
The stochastic fluctuations in the batch composition noise sequence $\mathcal{N}_i$, primarily driven by the random sampling of batch members, possess significant spectral energy at higher frequencies compared to the typical evolution of the scaled per-sample loss sequence $\mathcal{S}_i$. The latter reflects the comparatively slower changes in the model's performance on sample $i$ due to gradual parameter updates $\btheta_t$.
\end{assumption}

\begin{remark}
This assumption is grounded in the differing timescales of batch sampling (per-step randomness) and model learning (changes over multiple updates or epochs). While $\loss_i(t)$ can change, its persistent trend is usually smoother than the noise induced by constantly varying batch compositions.
\end{remark}

\begin{proposition}[BLS Score as Low-Pass Filtered Estimate]
\label{prop:filtered_estimate_final}
The BLS score $s_i^{(k)}$ derived from \cref{eq:ema_indexed_final} is the output of applying a first-order IIR low-pass filter $H_{\alpha}$ to the input sequence $\mathcal{L}_i = \mathcal{S}_i + \mathcal{N}_i$, plus a term decaying with the initial condition $s_i^{(0)}$:
\begin{equation}
    s_i^{(k)} = (H_{\alpha} * \mathcal{L}_i)[k] + \alpha^k s_i^{(0)}
    \label{eq:prop_conv_form_final}
\end{equation}
where $*$ denotes discrete convolution and $H_{\alpha}$ is the filter with impulse response $h[n] = (1-\alpha)\alpha^n u[n]$ ($u[n]$ being the Heaviside step function). By filter linearity, this decomposes into filtered signal and noise components:
\begin{equation}
    s_i^{(k)} = \underbrace{(H_{\alpha} * \mathcal{S}_i)[k]}_{\text{Smoothed Signal Trend}} + \underbrace{(H_{\alpha} * \mathcal{N}_i)[k]}_{\text{Smoothed Noise Baseline}} + \alpha^k s_i^{(0)}.
    \label{eq:prop_decomp_form_final}
\end{equation}
Invoking Assumption \ref{ass:frequency_final}, the low-pass characteristic of $H_{\alpha}$ ensures that high-frequency components, predominantly present in the noise $\mathcal{N}_i$, are attenuated in $(H_{\alpha} * \mathcal{N}_i)[k]$. Consequently, $s_i^{(k)}$ provides an estimate emphasizing the smoothed, persistent trend of the scaled per-sample loss $\mathcal{S}_i$, with reduced influence from stochastic batch variations.
\end{proposition}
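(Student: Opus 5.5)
We would prove the statement in three stages: an exact algebraic identity (the convolution form \cref{eq:prop_conv_form_final}), linearity to get \cref{eq:prop_decomp_form_final}, and a frequency-domain analysis of $H_{\alpha}$ that turns Assumption~\ref{ass:frequency_final} into a statement about attenuation.

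\textbf{Step 1: unrolling the recursion.} Starting from \cref{eq:ema_indexed_final}, we would show by induction on $k$ that
\begin{equation*}
s_i^{(k)} = \alpha^k s_i^{(0)} + (1-\alpha)\sum_{m=1}^{k} \alpha^{k-m}\,\mathcal{L}_i[m].
\end{equation*}
The base case $k=0$ is trivial. For the inductive step, substitute the hypothesis into $\alpha s_i^{(k-1)} + (1-\alpha)\mathcal{L}_i[k]$. We then extend $\mathcal{L}_i$ by zero for indices $m \le 0$, which is the causal convention. With $h[n] = (1-\alpha)\alpha^n u[n]$, the convolution $(H_\alpha * \mathcal{L}_i)[k] = \sum_{m} h[k-m]\mathcal{L}_i[m]$ then reduces to exactly the finite sum above, because $u[k-m]$ kills the terms with $m>k$. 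This gives \cref{eq:prop_conv_form_final}.

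\textbf{Step 2: linearity.} Since $\mathcal{L}_i[k] = \mathcal{S}_i[k] + \mathcal{N}_i[k]$ termwise by \cref{eq:loss_decomp_final}, and convolution is bilinear, \cref{eq:prop_decomp_form_final} follows immediately. We would also note that $\alpha^k s_i^{(0)} \to 0$ geometrically, so the initialization is transient.

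\textbf{Step 3: the low-pass claim.} We would compute the frequency response
\begin{equation*}
H_\alpha(e^{j\omega}) = \sum_{n\ge 0}(1-\alpha)\alpha^n e^{-j\omega n} = \frac{1-\alpha}{1-\alpha e^{-j\omega}},
\end{equation*}
which converges since $|\alpha|<1$. Its squared gain is
\begin{equation*}
|H_\alpha(e^{j\omega})|^2 = \frac{(1-\alpha)^2}{1 - 2\alpha\cos\omega + \alpha^2}.
\end{equation*}
This gain has three properties we would establish:
\begin{itemize}
\item it equals $1$ at $\omega=0$, so persistent trends and DC components pass unchanged;
\item it is strictly decreasing on $[0,\pi]$, because the denominator increases as $\cos\omega$ decreases;
\item it reaches its minimum $\bigl((1-\alpha)/(1+\alpha)\bigr)^2$ at $\omega=\pi$.
\end{itemize}
This is precisely the low-pass property.

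To connect it to Assumption~\ref{ass:frequency_final}, we would model $\mathcal{N}_i$ as a wide-sense stationary sequence with power spectral density $P_N(\omega)$. The filtered noise then has spectral density $|H_\alpha(e^{j\omega})|^2 P_N(\omega)$, so any spectral mass at high frequencies is damped by a factor that shrinks as $\alpha \to 1$. A concrete illustration is the case where the fluctuations of $\mathcal{N}_i$ about its mean are approximately uncorrelated across appearances. This is plausible because $\batch_{t_k}\setminus\{i\}$ is resampled each time. In that case Parseval, or a direct geometric-series computation, shows the variance is reduced by the factor $\sum_n h[n]^2 = (1-\alpha)/(1+\alpha)$. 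The mean of $\mathcal{N}_i$, roughly $\tfrac{B-1}{B}$ times the average dataset loss, sits at DC and passes through. This is the ``smoothed noise baseline'' term. We would argue it is approximately common to all samples and hence does not affect the relative ordering used for pruning. Meanwhile, the slowly varying $\mathcal{S}_i$ retains most of its energy because its spectrum is concentrated near $\omega=0$, where the gain is close to $1$.

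\textbf{Main obstacle.} Steps 1 and 2 are routine. The hard part is Step 3, because Assumption~\ref{ass:frequency_final} is only qualitative, and $\mathcal{N}_i$ and $\mathcal{S}_i$ are neither stationary nor independent of each other, since $\btheta_t$ couples all the losses. We would handle this by stating the conclusion relative to the spectral model: under WSS and the white-fluctuation idealization, we give the explicit variance-reduction factor and the monotone gain. The nonstationary trend would be treated as approximately constant over the effective memory window of length about $1/(1-\alpha)$. We would not attempt a fully rigorous treatment of the coupled, nonstationary process.
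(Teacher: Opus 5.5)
Your proposal is correct and follows the same route as the paper's proof: you unroll the first-order recursion into a convolution with $h[n]=(1-\alpha)\alpha^n u[n]$ plus the zero-input term $\alpha^k s_i^{(0)}$, split by linearity, and argue the low-pass property from the monotone decay of $|H_\alpha(e^{j\omega})|$ on $[0,\pi]$ before invoking Assumption~\ref{ass:frequency_final} qualitatively. Your explicit zero-extension of $\mathcal{L}_i$ to indices $m\le 0$ tidies up the paper's sum $\sum_{j=0}^{k}$, whose $j=k$ term touches the undefined $\mathcal{L}_i[0]$, and your WSS/white-fluctuation variance factor $(1-\alpha)/(1+\alpha)$ is a quantitative refinement that the paper does not state rather than a different argument.
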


\begin{proof}
    The update rule \cref{eq:ema_indexed_final}, $s_i^{(k)} = \alpha s_i^{(k-1)} + (1-\alpha) \mathcal{L}_i[k]$, defines the output $s_i^{(k)}$ of a discrete-time LTI system \cite{oppenheim1999discrete} with input sequence $\mathcal{L}_i[k] = \batchloss(\batch_{t_k}, t_k)$.
    The impulse response of this first-order recursive system is $h[n] = (1-\alpha)\alpha^n u[n]$, where $u[n]$ is the Heaviside step function.
    
    For a general input $\mathcal{L}_i$, the output of the LTI system is given by the convolution of the input with the impulse response, plus the zero-input response due to the initial condition:
    \begin{equation}
        \begin{split}
            s_i^{(k)} =& \sum_{j=0}^{k} h[j] \mathcal{L}_i[k-j] + \alpha^k s_i^{(0)} \\
            =& (H_\alpha * \mathcal{L}_i)[k] + \alpha^k s_i^{(0)},
        \end{split}
    \end{equation}
    where $H_\alpha$ denotes the filter with impulse response $h[n]$ and $*$ denotes discrete convolution. This confirms \cref{eq:prop_conv_form_final}.
    The filter $H_\alpha$ is a standard first-order IIR low-pass filter. Its frequency response is $H(e^{j\omega}) = \frac{1-\alpha}{1-\alpha e^{-j\omega}}$. The magnitude $|H(e^{j\omega})| = \frac{1-\alpha}{\sqrt{1 - 2\alpha \cos(\omega) + \alpha^2}}$ is maximized at $\omega=0$ (DC) and decreases monotonically as frequency $|\omega|$ increases towards $\pi$. This low-pass characteristic means the filter attenuates high-frequency components while passing low-frequency components. The parameter $\alpha \in (0, 1)$ controls the cutoff frequency; $\alpha \to 1$ implies stronger smoothing (lower cutoff).
    
    By the linearity property of convolution (and the filter $H_\alpha$), applying the filter to the decomposed input $\mathcal{L}_i = \mathcal{S}_i + \mathcal{N}_i$ (from \cref{eq:loss_decomp_final}) yields:
    \begin{equation}
        \begin{split}
            (H_\alpha * \mathcal{L}_i)[k] =& (H_\alpha * (\mathcal{S}_i + \mathcal{N}_i))[k] \\
            =& (H_\alpha * \mathcal{S}_i)[k] + (H_\alpha * \mathcal{N}_i)[k].
        \end{split}
    \end{equation}
    Substituting this into the expression for $s_i^{(k)}$ gives the decomposition in \cref{eq:prop_decomp_form_final}.
    
    Invoking Assumption \ref{ass:frequency_final}, the batch composition noise sequence $\mathcal{N}_i$ is assumed to have significant energy at higher frequencies compared to the scaled per-sample loss sequence $\mathcal{S}_i$. The low-pass filter $H_\alpha$ therefore primarily:
    Passes and smooths the lower-frequency content of $\mathcal{S}_i$, resulting in $(H_{\alpha} * \mathcal{S}_i)[k]$ which reflects the persistent trend of the scaled sample loss.
    Attenuates the higher-frequency content dominant in $\mathcal{N}_i$, resulting in a smoothed noise term $(H_{\alpha} * \mathcal{N}_i)[k]$ with reduced variance.
    Consequently,
    the BLS score $s_i^{(k)}$ predominantly captures the smoothed signal trend, effectively filtering out much of the noise induced by random batch composition. This allows $s_i^{(k)}$ to serve as a robust estimate of the sample's persistent loss contribution over time.
    \end{proof}

\subsection{Discussion Effectiveness of BLS}
\label{sec:discussion}

The theoretical analysis, culminating in Proposition~\ref{prop:filtered_estimate_final}, offers several insights into BLS's efficacy as a proxy for sample importance:

\textbf{Sample-Specific Filtered History.} The conditional EMA update (\cref{eq:ema_update_final}) ensures each sample score, $\score_i(t)$, aggregates information solely from batch losses observed during its active participation. Applying the filter $H_\alpha$ to this unique sequence generates a sample-specific history, distinguishing $\score_i(t)$ from $\score_j(t)$ for $i \neq j$.

\textbf{Filtering Batch Composition Randomness.} BLS's core mechanism is noise filtering. The EMA acts as a low-pass filter $H_\alpha$ which, under Assumption~\ref{ass:frequency_final}, effectively attenuates high-frequency variations inherent in the batch composition noise $\mathcal{N}_i$. This prevents scores from being skewed by transient, unrepresentative losses of randomly co-occurring samples within a single batch.

\textbf{Approximating Persistent Loss Contribution.} The BLS score (\cref{eq:prop_decomp_form_final}) approximates the sum of the smoothed scaled signal $(H_{\alpha} * \mathcal{S}_i)[k]$ and smoothed noise $(H_{\alpha} * \mathcal{N}_i)[k]$. The term $(H_{\alpha} * \mathcal{S}_i)[k]$ captures the smoothed historical trend of the sample's scaled loss contribution $(1/B)\loss_i(t)$. If the smoothed noise $(H_{\alpha} * \mathcal{N}_i)[k]$ acts as a relatively stable baseline, score differences $\score_i(t) - \score_j(t)$ are primarily driven by differences in their smoothed signal components. Thus, BLS effectively ranks samples by their persistent, time-averaged contribution to batch loss.

\textbf{Role of EMA Parameter $\alpha$.} The decay factor $\alpha$ governs the filter's characteristics. A larger $\alpha$ (closer to 1) implies stronger smoothing (lower cutoff frequency), leading to greater noise attenuation but reduced responsiveness to true changes in sample importance $\loss_i(t)$. Conversely, smaller $\alpha$ values yield more responsive but noisier scores. Optimal $\alpha$ selection balances noise suppression with tracking sample importance dynamics.
\definecolor{RoyalBlue}{RGB}{65,105,225}
\definecolor{nicegreen}{RGB}{0,180,0}
\newcommand{\blue}[1]{$_{\color{RoyalBlue}\downarrow #1}$}
\newcommand{\mred}[1]{$_{\color{nicegreen}\uparrow #1}$}

\begin{table*}[htbp]
    \centering
    \caption{BLS effectively proxies per-sample loss methods (InfoBatch, SeTa) on large-scale datasets. Results show comparable or improved performance for BLS-variants across diverse vision-language and scene text recognition tasks. (a) ToCa dataset (zero-shot captioning, 3M samples). (b) MJ+ST dataset (scene text recognition, 15M samples). (c) SS1M dataset (cross-domain captioning, 3M samples). Pruned \% indicates data reduction.}
    \label{tab:combined_tables_all_data}

    \begin{minipage}[t]{0.4\textwidth}
        \centering
        \footnotesize
        \setlength{\tabcolsep}{2.5pt}
        \par\vspace{1ex}
        \parbox{\linewidth}{\centering\textbf{(a) ViECap on ToCa}}\par\vspace{1ex}
        \begin{tabular}{@{}cc|cccc|c@{}}
            \toprule[0.9pt]
            \multirow{2}{*}{Method} & \multirow{2}{*}{Pruned \%} & \multicolumn{4}{c|}{NoCaps Val (CIDEr)} & COCO \\
            ~ & ~ & In & Near & Out & Overall & Test \\
            \midrule[0.9pt]
            Full & - & 63.2 & 68.8 & 70.2 & 70.5 & 95.2 \\
            \hline
            InfoBatch & 23.6 & 63.4 & 68.2 & 70.4 & 70.2 & 94.4 \\
            SeTa & 31.7 & 63.4 & 69.7 & 71.4 & 71.5 & 95.3 \\
            \hline
            BLS-InfoBatch & 23.8 & 65.1 & 68.9 & 69.8 & 70.6 & 94.7 \\
            BLS-SeTa & 32.0 & 64.3 & 69.4 & 71.0 & 71.2 & 95.1 \\
            \bottomrule[0.9pt]
        \end{tabular}
    \end{minipage}
    \hfill 
    \begin{minipage}[t]{0.29\textwidth}
        \centering
        \footnotesize
        \setlength{\tabcolsep}{2.5pt}
        \par\vspace{1ex}
        \parbox{\linewidth}{\centering\textbf{(b) ABINet on MJ+ST}}\par\vspace{1ex}
        \begin{tabular}{@{}c|ccccc@{}} 
            \toprule[0.9pt]
            \multirow{2}{*}{Pruned \%} & IIIT5k & SVT & IC15 & SVTP & CUTE80 \\
            ~ & \multicolumn{5}{c}{Accuracy \%} \\ 
            \midrule[0.9pt]
            -    & 96.1 & 93.4  & 85.4 & 88.7 & 89.2 \\ 
            \hline
            26.6 & 96.3 & 93.6 & 85.4 & 88.8 & 89.4 \\ 
            28.1 & 96.2 & 93.4 & 85.6 & 89.3 & 89.2 \\ 
            \hline
            25.4 & 96.1 & 93.8 & 85.5 & 89.1 & 89.9 \\ 
            33.0 & 96.2 & 94.0 & 85.5 & 88.7 & 89.1 \\ 
            \bottomrule[0.9pt]
        \end{tabular}
    \end{minipage}
    \hfill 
    \begin{minipage}[t]{0.27\textwidth}
        \centering
        \footnotesize
        \setlength{\tabcolsep}{2.5pt}
        \par\vspace{1ex}
        \parbox{\linewidth}{\centering\textbf{(c) ViECap on SS1M}}\par\vspace{0.9ex}
        \begin{tabular}{@{}c|cc|cc@{}} 
            \toprule[0.9pt]
            \multirow{2}{*}{Pruned \%} & \multicolumn{2}{c|}{COCO} & \multicolumn{2}{c}{Flickr30k} \\
            ~ & B@4 & C & B@4 & C \\
            \midrule[0.9pt]
            -    & 9.6 & 45.1 & 6.5 & 22.3 \\ 
            \hline
            23.0 & 9.2  & 44.4 & 6.4 & 22.4 \\ 
            40.0 & 9.5 & 45.6 & 6.4 & 22.0 \\ 
            \hline
            24.9 & 9.3  & 44.5 & 6.5 & 22.5 \\ 
            41.1 & 9.6 & 45.5 & 6.5 & 22.2 \\ 
            \bottomrule[0.9pt]
        \end{tabular}
    \end{minipage}
\end{table*}

\begin{table}[h]
    \centering
    \caption{Comparison of state-of-the-art methods on CIFAR classification (ResNet18, Accuracy \%) at different pruning rates.
    BLS effectively proxies per-sample loss methods to achieve comparable accuracy to original methods under the same pruning rate.
    $^{\dag}$: adjusted epochs in original paper. $^{\ddagger}$: same epochs as other methods. \textbackslash{}: unattainable ratio. Random*: dynamic random.
    Performance relative to Full training is indicated ($\downarrow$: drop, $\uparrow$: gain).
    }
    \label{tab:sota}
    \footnotesize
    \setlength{\tabcolsep}{0pt}
    \begin{tabular}{c|ccc|ccc}
        \toprule[0.9pt]
        \multirow{2}{*}{Method} & \multicolumn{3}{c|}{CIFAR10} & \multicolumn{3}{c}{CIFAR100} \\
        & 30\% & 50\% & 70\% & 30\% & 50\% & 70\% \\
        \midrule[0.9pt]
        ResNet18 & \multicolumn{3}{c|}{95.6} & \multicolumn{3}{c}{78.2} \\
        \hline
        Random & 94.6 \blue{1.0} & 93.3 \blue{2.3} & 90.2 \blue{5.4} 
            & 73.8 \blue{4.4} & 72.1 \blue{6.1} & 69.7 \blue{8.5} \\

        GraNd-4~\citep{paul2021deep}
            & 95.3 \blue{0.3} & 94.6 \blue{1.0} & 91.2 \blue{4.4} & 74.6 \blue{3.6} & 71.4 \blue{6.8} & 68.8 \blue{9.4} \\
        EL2N-20~\citep{toneva2018empirical}
            & 95.3 \blue{0.3} & 95.1 \blue{0.5} & 91.9 \blue{3.7} & 77.2 \blue{1.0}& 72.1 \blue{6.1} & - \\
        DP~\citep{yang2023dataset}
            & 94.9 \blue{0.7} & 93.8 \blue{1.8} & 90.8 \blue{4.8} & 77.2 \blue{1.0} & 73.1 \blue{5.1} & - \\
            
        Random* & 94.8 \blue{0.8} & 94.5 \blue{1.1} & 93.0 \blue{2.6} 
            & 77.3 \blue{0.9} & 75.3 \blue{2.9} & - \\
        $\epsilon$-greedy~\cite{ucb} & 95.2 \blue{0.4} & 94.9 \blue{0.7} & 94.1 \blue{1.5}
            & 76.4 \blue{1.8}& 74.8 \blue{3.4} & - \\
        UCB~\cite{ucb} & 95.3 \blue{0.3} & 94.7 \blue{0.9} & 93.9 \blue{1.7} 
            & 77.3 \blue{0.9} & 75.3 \blue{2.9} & - \\
  
        InfoBatch$^{\dag}$~\cite{InfoBatch} & \colorbox{lightgray}{95.6 \mred{0.0}} & 95.1 \blue{0.5} &  94.7 \blue{0.9} 
            & \colorbox{lightgray}{78.2 \mred{0.0}}& 78.1 \blue{0.1} & 76.5 \blue{1.7} \\
        \hline
        InfoBatch$^{\ddagger}$~\cite{InfoBatch} & \colorbox{lightgray}{95.6 \mred{0.0}} & 95.0 \blue{0.6} & 94.4 \blue{1.2}
            & \colorbox{lightgray}{78.3 \mred{0.1}} & 77.7 \blue{0.5} & \textbackslash \\
        SeTa \cite{seta} & \colorbox{lightgray}{95.7 \mred{0.1}} & 95.3 \blue{0.3} & 95.0 \blue{0.6}
            & \colorbox{lightgray}{78.4 \mred{0.2}} & 78.0 \blue{0.2} & 76.7 \blue{1.6} \\
          
        \hline
        BLS-InfoBatch & \colorbox{lightgray}{95.6 \mred{0.0}} & 95.1 \blue{0.5} & 94.5 \blue{1.1}
            & \colorbox{lightgray}{78.4 \mred{0.2}} & 77.7 \blue{0.5} & \textbackslash \\
        BLS-SeTa & \colorbox{lightgray}{95.6 \mred{0.0}} & 95.3 \blue{0.3} & 95.0 \blue{0.6}
            & \colorbox{lightgray}{78.5 \mred{0.3}} & 78.1 \blue{0.1} & 76.7 \blue{1.5} \\
  
        \bottomrule[0.9pt]
    \end{tabular}
  \end{table}

\section{Experiments}
\label{sec:experiments}
We conduct extensive experiments, that include 14 datasets, 11 tasks, 18 model architectures and 2 pruning methods, to evaluate the \textit{effectiveness}, \textit{efficiency}, and \textit{generalization} capabilities of the proposed BLS.
Our evaluation strategy is designed to progressively demonstrate BLS's capabilities: first establishing its validity as an effective proxy for per-sample loss in standard settings, then showcasing its competitive performance and broad generalization across challenging tasks and architectures, and finally, empirically validating its underlying theoretical mechanism.

\textbf{Tasks and Datasets.} Evaluation spans classification (CIFAR \cite{cifar100}, ImageNet \cite{ImageNet}), object detection/segmentation (COCO \cite{MSCOCO1}), vision-language (image/video captioning on COCO \cite{COCO-cap}/NoCaps \cite{NoCaps}/SS1M \cite{SS1M}/ToCa \cite{ToCa}/MSR-VTT \cite{MSRVTT}), scene text recognition on MJ+ST (\cite{MJ,ST}), image generation (MNIST \cite{minst}, CIFAR10), multi-view stereo (WHU-MVS \cite{WHU-MVS}), and semi-supervised learning \cite{usb} (image classification on CIFAR100, text classification on Yelp Review \cite{yelp_dataset}, audio classification on ESC-50 \cite{esc50}).

\textbf{Architectures.} We test across CNNs (ResNet18/50 \citep{resnet}, EfficientNet \citep{tan2019efficientnet}, YOLOv5 \cite{yolo}), Transformers (ViT \citep{dosovitskiy2020image}, Swin \citep{liu2021swin}), and emerging architectures Vim \citep{vim}. Vision-language tasks use ViECap \citep{viecap} or ABINet \citep{ABINet} backbones. Generative models include VAE \cite{DBLP:journals/corr/KingmaW13} and DDPM \cite{ho2020denoising,ho2022classifier}. MVS uses Ada-MVS \citep{Ada-MVS}. Semi-supervised learning uses FixMatch \cite{sohn2020fixmatch}/FlexMatch \cite{zhang2021flexmatch}/Dash \cite{xu2021dash}.

\textbf{BLS Integration.} We evaluate BLS by integrating it into two representative per-sample loss-based pruning methods: InfoBatch \citep{InfoBatch} and SeTa \citep{seta}, denoted as BLS-InfoBatch and BLS-SeTa. 
This allows direct assessment of BLS as a drop-in replacement for $\loss_i(t)$ scoring; such integration is achieved with the minimal \textbf{\textit{one-line proxy}} and \textbf{\textit{three-line injection}} mechanisms detailed in Suppl. \cref{sec:bls_simplicity_integration}.
To ensure fairness, all experiments adopt the established settings corresponding to their respective tasks, models, and datasets.

\textbf{Evaluation Metrics.} Model performance is measured using standard task-specific metrics. Efficiency is primarily evaluated by the percentage of data pruned \textbf{(Pruned \%)} while maintaining comparable performance to the baseline or original method. We justify this choice in Suppl. \cref{appendix:efficiency_details}.

\begin{table*}[h]
    \caption{BLS demonstrates strong cross-architecture generalization on ImageNet-1K and CIFAR100 with CNNs, Transformers, and Mambas. Format: Accuracy / Pruned \%.}
    \label{tab:arch}
    \centering
    \footnotesize
    \setlength{\tabcolsep}{1.2pt}
    \begin{tabular}{c|ccc|cc|c|cc@{}}
        \toprule[0.9pt]
        \multirow{3}{*}{Method} & \multicolumn{6}{c|}{ImageNet-1K} & \multicolumn{2}{c}{\multirow{2}{*}{CIFAR100}} \\
        ~ & \multicolumn{3}{c}{CNN} & \multicolumn{2}{c}{Transformer} & Mamba & ~ & ~\\
        \cmidrule(lr){2-4} \cmidrule(lr){5-6}  \cmidrule(lr){7-7}
        ~ & R18 & R50 & EfficientNet & ViT & Swin & Vim & R18 & R50 \\
        \midrule[0.9pt]
        Full
        & 69.5 & 78.6 & 76.1
        & 73.3  & 80.0 & 75.7
        & 78.2 & 80.6 \\
        \hline

        \hline
        \multirow{1}{*}{BLS-InfoBatch}
            & 69.4 / 32.8 & 78.2 / 31.5 & 75.7 / 32.8
            & 73.2 / 26.2  & 80.0 / 30.8 & 75.6 / 22.9 
            & 78.3 / 33.9 & 80.5 / 38.6 \\
        \multirow{1}{*}{BLS-SeTa}
            & 69.5 / 31.3 & 78.4 / 32.3 & 76.0 / 31.1
            & 73.3 / 25.2  & 80.0 / 44.5 & 75.6 / 29.4 
            & 78.4 / 40.1 & 80.7 / 40.7 \\
        \bottomrule[0.9pt]
    \end{tabular}
\end{table*}

\begin{table*}[ht]
    \caption{Consolidated results for various methods on different datasets and tasks.
    (a) ViECap on COCO dataset: Image captioning performance with 556K samples.
    (b) ViECap on MSR-VTT dataset: Video captioning performance with 180K samples.
    (c) Ada-MVS on WHU-MVS dataset: Multi-view stereo performance with 28K image-depth map samples.}
    \label{tab:combined_tables_set2_horizontal}
    \noindent 
    \begin{minipage}[t]{0.55\textwidth}
        \centering
        \footnotesize 
        \setlength{\tabcolsep}{3pt} 

        \par\vspace{1ex}
        \parbox{\linewidth}{%
          \centering
          \textbf{(a) Image Captioning}
        }
        \par\vspace{1ex}

        \begin{tabular}{@{}cc|cccc|cccc@{}} 
            \toprule[0.9pt]
            \multirow{2}{*}{Method} & \multirow{2}{*}{Pruned \%} & \multicolumn{4}{c|}{NoCaps Val (CIDEr)} & \multicolumn{4}{c}{COCO} \\
            ~ & ~ & In & Near & Out & Overall & B@4 & M & C & S \\
            \midrule[0.9pt]
            Full & - 
                & 58.4 & 63.1 & 65.3 & 65.2
                & 27.1 & 24.6 & 91.5 & 18.0
                \\
            \hline
            BLS-InfoBatch & 20.9
                & 58.8 & 63.1 & 65.8 & 65.3
                & 27.1 & 24.6 & 91.9 & 17.9
                \\
            BLS-SeTa & 28.1
                & 58.9 & 63.1 & 65.6 & 65.3
                & 27.2 & 24.7 & 92.0 & 18.0
                \\
            \bottomrule[0.9pt]
        \end{tabular}
    \end{minipage}
    \hfill
    \begin{minipage}[t]{0.22\textwidth}
        \centering
        \footnotesize 
        \setlength{\tabcolsep}{3pt} 

        \par\vspace{1ex}
        \parbox{\linewidth}{%
          \centering
          \textbf{(b) Video Captioning}
        }
        \par\vspace{1ex}

        \begin{tabular}{@{}c|cccc@{}} 
            \toprule[0.9pt]
            \multirow{2}{*}{Pruned \%} & \multicolumn{4}{c}{MSR-VTT} \\
            ~ & B@4 & M & C & S \\
            \midrule[0.9pt]
            - & 23.4 & 20.8 & 27.9 & 5.0 \\
            \hline
            21.8 & 24.7 & 21.0 & 29.9 & 5.0 \\
            40.3 & 24.7 & 21.3 & 29.3 & 4.9 \\
            \bottomrule[0.9pt]
        \end{tabular}
    \end{minipage}
    \hfill
    \begin{minipage}[t]{0.16\textwidth}
        \centering
        \footnotesize 
        \setlength{\tabcolsep}{3pt} 

        \par\vspace{1ex}
        \parbox{\linewidth}{%
          \centering
          \textbf{(c) Multi-View Stereo}
        }
        \par\vspace{1ex}

        \begin{tabular}{@{}c|cc@{}} 
            \toprule[0.9pt]
            \multirow{2}{*}{Pruned \%} & \multicolumn{2}{c}{WHU-MVS} \\
            ~ & {<}3-i & {<}0.6m \\ 
            \midrule[0.9pt]
            - & 95.01 & 97.38 \\
            \hline
            45.0 & 95.17 & 97.33 \\
            37.5 & 95.04 & 97.45 \\
            \bottomrule[0.9pt]
        \end{tabular}
    \end{minipage}
\end{table*}

\subsection{Effective Proxy for Per-Sample Loss}
\label{sec:bls_as_proxy_and_sota_comparison} 


We first validate BLS's ability to serve as an effective proxy by integrating it into established per-sample loss-based pruning methods, InfoBatch \citep{InfoBatch} and SeTa, and comparing their performance with their BLS-enhanced counterparts (BLS-InfoBatch and BLS-SeTa).

\Cref{tab:combined_tables_all_data} presents these critical comparisons across three challenging large-scale vision-language and scene text recognition benchmarks. On the ToCa dataset for visual captioning (Table~\ref{tab:combined_tables_all_data}a), BLS-InfoBatch and BLS-SeTa achieve nearly identical or even slightly improved CIDEr scores (e.g., BLS-InfoBatch Overall CIDEr 70.6 vs. InfoBatch 70.2; BLS-SeTa 71.2 vs. SeTa 71.5) while maintaining comparable data pruning percentages (23.8\% and 30.0\% respectively). Similarly, for scene text recognition on the 15M-sample MJ+ST dataset (Table~\ref{tab:combined_tables_all_data}b), BLS-variants demonstrate strong parity. For instance, BLS-SeTa (pruning 33.0\%) achieves an accuracy of 94.0\% on SVT, matching or exceeding the original SeTa (93.4\% at 28.1\% pruning). The trend continues on the SS1M dataset for cross-domain image captioning (Table~\ref{tab:combined_tables_all_data}c), where BLS-SeTa (41.1\% pruned) yields a COCO CIDEr of 45.5, on par with SeTa (45.6 at 40.0\% pruned). These results across diverse, large-scale datasets with complex underlying losses strongly suggest that BLS effectively captures the essential sample importance signals that these sophisticated per-sample loss methods rely upon.

Further corroborating these findings, Table~\ref{tab:sota} details performance on standard dynamic data pruning benchmarks, CIFAR10 and CIFAR100, using ResNet18. When compared directly with InfoBatch and SeTa, the BLS-integrated versions consistently deliver statistically indistinguishable accuracies across various pruning rates (30\%, 50\%, 70\%).
Critically, both BLS-InfoBatch and BLS-SeTa often achieve performance on par with the fully trained baseline at moderate pruning rates. This not only confirms BLS's efficacy as a proxy but also hints at its potential to regularize training by focusing on more informative samples.

\begin{table}[ht]
    \centering
    \captionof{table}{Image generation with different architectures (VAE on MNIST, DDPM on CIFAR10, DDPM with Classifier Guidance (CFG) on CIFAR10). FID: Frechet Inception Distance.}
    \label{tab:vae_mnist}
    \footnotesize
    \setlength{\tabcolsep}{2pt}
    \begin{tabular}{c|cccccc} 
        \toprule[0.9pt]
        \multirow{2}{*}{Method} 
            & \multicolumn{2}{c}{VAE}
            & \multicolumn{2}{c}{DDPM}
            & \multicolumn{2}{c}{DDPM-CFG} \\
        \cmidrule(lr){2-3} \cmidrule(lr){4-5} \cmidrule(lr){6-7}
        ~ & Pruned \% & FID & Pruned \% & FID & Pruned \% & FID \\
        \midrule[0.9pt]
        Full & - & 35.34 
            & - & 16.38
            & - & 14.89 \\
        \hline
        BLS-InfoBatch & 28.9 & 35.55
            & 28.7 & 17.01
            & 22.0 & 14.99 \\
        BLS-SeTa & 34.0 & 35.34
            & 30.8 & 16.35
            & 22.3 & 14.72 \\
        \bottomrule[0.9pt]
    \end{tabular}
\end{table}

\begin{table}[ht]
    \centering
    \captionof{table}{Semi-supervised learning including image classification (FixMatch on CIFAR100), text sentiment classification (FlexMatch on Yelp Review) and audio classification (Dash on ESC-50).}
    \label{tab:semi_cifar100}
    \footnotesize
    \setlength{\tabcolsep}{2pt}
    \begin{tabular}{c|cc|cc|cc}
        \toprule[0.9pt]
        \multirow{2}{*}{Method} & \multicolumn{2}{c|}{Image} & \multicolumn{2}{c|}{Text} & \multicolumn{2}{c}{Audio} \\
        \cmidrule(lr){2-3} \cmidrule(lr){4-5} \cmidrule(lr){6-7}
        ~ & Pruned \% & Acc & Pruned \% & Acc & Pruned \% & Acc \\
        \midrule[0.9pt]
        Full & - & 61.9 & - & 53.7 & - & 64.5 \\
        \hline
        BLS-InfoBatch 
            & 31.3 & 61.9 
            & 38.1 & 54.4 
            & 25.0 & 64.5 \\
        BLS-SeTa 
            & 47.4 & 61.9 
            & 49.4 & 54.7 
            & 28.7 & 64.6 \\
        \bottomrule[0.9pt]
    \end{tabular}
\end{table}


\begin{table*}[h]
    \caption{Enhancing computational efficiency of a unified YOLOv5 framework for image classification on CIFAR100, alongside object detection and instance segmentation on COCO, through BLS.}
    \label{tab:yolo_coco}
    \centering
    \footnotesize
    \begin{tabular}{c|ccc|ccc|ccc}
        \toprule[0.9pt]
        \multirow{2}{*}{Method} & \multicolumn{3}{c|}{Image Classification} & \multicolumn{3}{c|}{Object Detection} & \multicolumn{3}{c}{Instance Segmentation} \\
        \cmidrule(lr){2-4} \cmidrule(lr){5-7} \cmidrule(lr){8-10}
        ~ & Pruned \% & Acc@1 & Acc@5 & Pruned \% & mAP$_{.5}$ & mAP$_{.5:.95}$ & Pruned \% & mAP$_{.5}$ & mAP$_{.5:.95}$ \\
        \midrule[0.9pt]
        YOLOv5n 
            & - & 58.0 & 84.6
            & - & 38.2 & 22.1 
            & - & 29.5 & 16.3
            \\
        \hline

        BLS-InfoBatch 
            & 20.4 & 57.6 & 84.0
            & 21.6 & 38.5 & 22.2 
            & 29.5 & 31.0 & 17.2
            \\
        BLS-SeTa 
            & 35.4 & 58.0 & 84.6
            & 25.2 & 38.3 & 22.1 
            & 36.1 & 31.7 & 17.6
            \\
        \bottomrule[0.9pt]
    \end{tabular}
\end{table*}

\subsection{Broad Generalization of Batch Loss Signal}
\label{sec:generalization_power}

A cornerstone of BLS's utility is its inherent generalizability, stemming from its sole reliance on the ubiquitous mean batch loss signal. This independence from specific loss or architectural details allows broad applicability, which we demonstrate across diverse models and tasks.

\textbf{Architectural Agnosticism.} BLS exhibits robust performance across varied architectures. Table~\ref{tab:arch} shows that for ImageNet-1K and CIFAR100, BLS-guided pruning maintains accuracy close to full training across CNNs (ResNets, EfficientNet), Transformers (ViT, Swin), and even Mamba (Vim). For instance, Vim with BLS-InfoBatch on ImageNet-1K prunes 22.9\% data while nearly matching full accuracy (75.6\% vs. 75.7\%). This efficacy across fundamentally different architectures underscores BLS's versatility, requiring no architecture-specific adaptations.

\textbf{Effectiveness in Complex Vision Tasks.} BLS navigates tasks with multi-component losses, where per-sample loss $\loss_i(t)$ is arduous to define or access (\cref{appendix:difficulty_per_sample_loss}). Table~\ref{tab:yolo_coco} (YOLOv5) shows BLS-variants maintain or improve mAP for object detection and instance segmentation while pruning significantly (e.g., BLS-SeTa +2.2 mAP$_{.5}$ for segmentation with 36.1\% pruning). Similarly, for Multi-View Stereo (Table~\ref{tab:combined_tables_set2_horizontal}c, WHU-MVS), BLS-SeTa prunes 37.5\% data with slight accuracy gains. These results highlight BLS's advantage in bypassing intricate loss disentanglement.

\textbf{Generalization to Vision-Language (VL) Scenarios.} BLS robustly extends to VL tasks involving complex cross-modal modeling and large-scale, potentially noisy datasets. Results on image/video captioning (Table~\ref{tab:combined_tables_all_data}, Table~\ref{tab:combined_tables_set2_horizontal}a-b) show BLS-variants maintaining or improving metrics like CIDEr and B@4 (e.g., COCO captioning CIDEr 92.0 with BLS-SeTa vs. 91.5 baseline, pruning 28.1\%) while pruning 20-40\% of data. This underscores BLS's utility without needing specialized per-sample loss definitions for intricate multimodal signals.

\textbf{Adaptability to Diverse Learning Paradigms.} Beyond supervised recognition and VL tasks, BLS demonstrates compatibility with fundamentally different learning approaches.
\textit{Generative Modeling (Table~\ref{tab:vae_mnist}):} For VAEs and Diffusion Models (DDPM/DDPM-CFG), BLS-guided pruning (22-34\%) preserves FID scores, indicating high-quality generation with reduced data.
\textit{Semi-Supervised Learning (SSL, Table~\ref{tab:semi_cifar100}):} BLS readily integrates into SSL for image, text, and audio classification, enabling substantial data pruning (25-49\%) without degrading accuracy. This is notable as SSL consistency losses often defy traditional hardness metric decomposition.

This extensive empirical validation across diverse tasks, architectures, complex losses, and learning paradigms robustly confirms that BLS's reliance on the universal mean batch loss grants it exceptional generalization. It stands as a highly practical and versatile tool for enhancing training efficiency across the machine learning landscape.

\subsection{Empirical Validation of Frequency Separation}
\label{sec:exp_freq_validation}

\begin{figure} 
    \centering 
    \includegraphics[width=0.5\textwidth, height=0.6\textwidth]{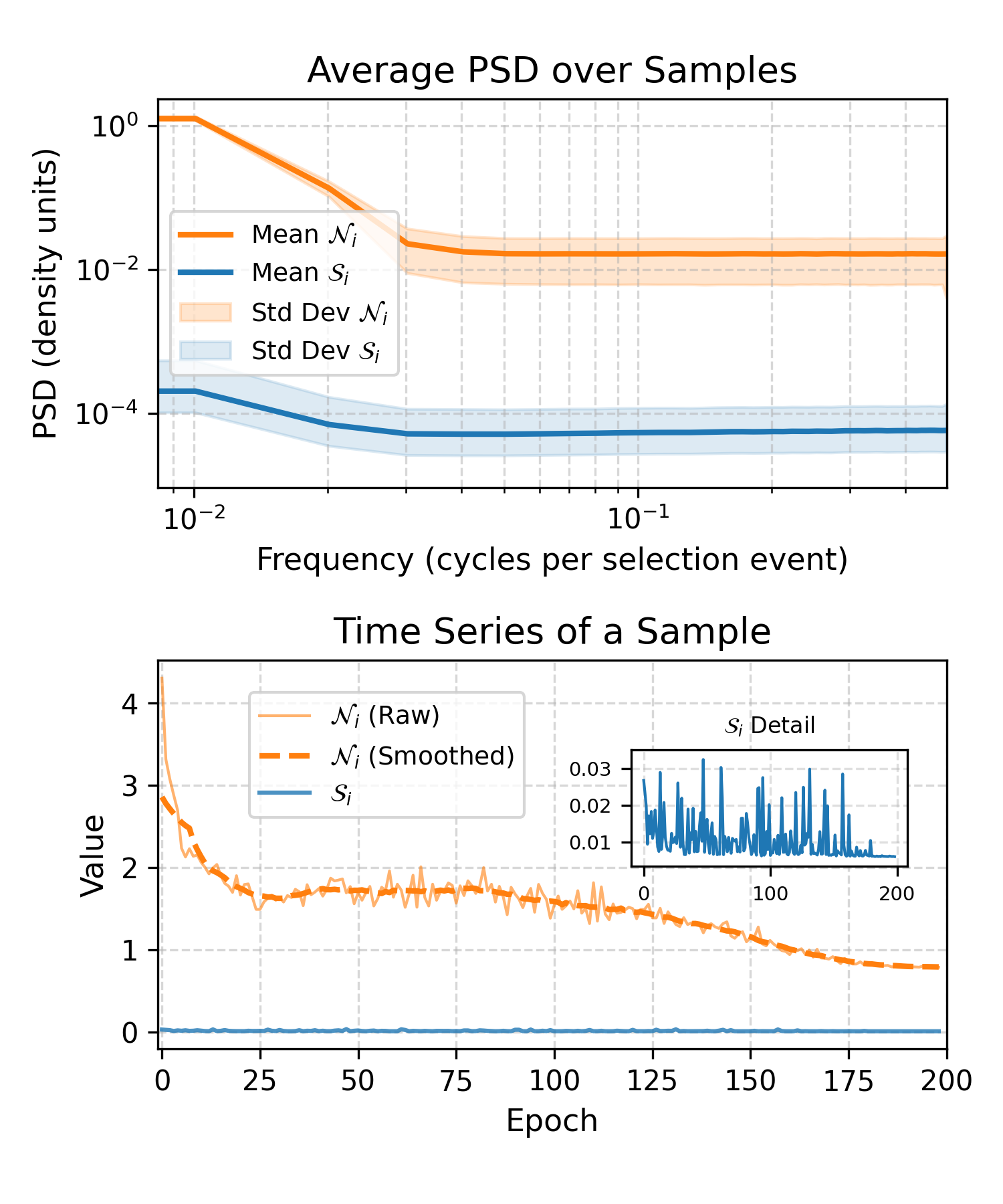} 
    \vspace{-20pt}
    \caption{Empirical validation of frequency separation.
    \textbf{Top:} Average PSD for scaled signal ($\text{Mean } \mathcal{S}_i$) and batch noise ($\text{Mean } \mathcal{N}_i$).
    \textbf{Bottom:} Time series for a representative sample.
    $\mathcal{S}_i$ magnitude is much smaller (see inset).
    }
    \label{fig:freq_separation_combined} 
    \vspace{-\intextsep} 
\end{figure}

To empirically validate our Frequency Separation Hypothesis (\cref{ass:frequency_final}), we analyzed the spectral characteristics of the scaled signal $\mathcal{S}_i$ and batch composition noise $\mathcal{N}_i$ components derived from the mean batch loss decomposition (Eq.~\ref{eq:loss_decomp_final}). We computed the Power Spectral Density (PSD) using Welch's method \citep{Welch1967Use} for sequences collected during standard training (ResNet18 on CIFAR100).

\Cref{fig:freq_separation_combined} presents the average PSD over 50K samples and 200 epochs and a representative time series. The results provide strong empirical support for our hypothesis. The average PSD (\cref{fig:freq_separation_combined}, top) demonstrates a clear separation: the noise component $\mathcal{N}_i$ (orange) possesses significantly higher energy density than the scaled signal $\mathcal{S}_i$ (blue) across all frequencies, with the disparity being particularly pronounced at higher frequencies. While $\mathcal{N}_i$ retains substantial high-frequency content, characteristic of stochastic batch sampling, $\mathcal{S}_i$'s energy is concentrated at lower frequencies, reflecting slower model dynamics. The time series plot (\cref{fig:freq_separation_combined}, bottom) visually confirms this, showing the high volatility of $\mathcal{N}_i$ around its trend compared to the small magnitude and smoother evolution of $\mathcal{S}_i$.

This observed frequency separation empirically justifies the core mechanism of BLS. The inherent low-pass filtering characteristic of the EMA update (\cref{prop:filtered_estimate_final}) effectively attenuates the dominant, high-frequency noise $\mathcal{N}_i$, enabling the BLS score $\score_i(t)$ to capture the persistent, low-frequency signal trend related to $\mathcal{S}_i$. This validates using BLS, derived solely from batch loss observations, as a efficient proxy for identifying samples with consistently high loss contributions.

\subsection{Ablation and Hyperparameter Analysis}
\label{sec:ablation_study}

\begin{table}[ht] 
    \footnotesize
    \centering
    \caption{Ablation study of BLS components on CIFAR100 (ResNet18/50). "P \%" denotes Pruning Percentage. "Time" includes pruning cost and training time. "+ BLS" refers to the full BLS integrated method. "+ w/o EMA BLS" uses only the last batch loss for scoring active samples.}
    \label{tab:ablation_bls}
    \setlength{\tabcolsep}{4pt} 
    \begin{tabular}{l|ccc|ccc} 
        \toprule[0.9pt]
        \multirow{2}{*}{Method} & \multicolumn{3}{c|}{R18} & \multicolumn{3}{c}{R50} \\
        & Acc & Time & P \% & Acc & Time & P \% \\
        \midrule[0.9pt]
        Full & 78.2 & 2.46h & - & 80.6 & 4.10h & - \\
        \hline
        InfoBatch  & 78.2 & 1.63h & 33.8 & 80.6 & 2.74h & 33.0 \\
        + w/o EMA BLS & 77.9 & 1.65h & 32.8 & 79.9 & 2.75h & 32.8 \\
        + BLS & \textbf{78.3} & 1.63h & 33.9 & \textbf{80.6} & 2.74h & 33.1 \\
        \hline
        SeTa  & 78.3 & 1.53h & 40.0 & 80.7 & 2.53h & 39.8 \\
        + w/o EMA BLS  & 78.0 & 1.53h & 39.8 & 79.9 & 2.55h & 38.8 \\
        + BLS  & \textbf{78.4} & 1.52h & 40.1 & \textbf{80.8} & 2.52h & 40.1 \\
        \bottomrule[0.9pt]
    \end{tabular}
\end{table}

\textbf{Ablation Analysis.}
We ablate BLS components on CIFAR100 (ResNet18/50), with results in Table~\ref{tab:ablation_bls}. We compare against full training, original per-sample loss methods (InfoBatch, SeTa), BLS using only the most recent batch loss for active samples ("+ w/o EMA BLS"), and full BLS integration ("+ BLS").
The data underscores the criticality of the EMA for temporal smoothing. Removing EMA ("+ w/o EMA BLS") consistently degrades accuracy compared to full BLS across both InfoBatch and SeTa integrations (e.g., R18 InfoBatch: 77.9\% vs. 78.3\%; R50 SeTa: 79.9\% vs. 80.8\%). This highlights that instantaneous batch losses alone are insufficient, and EMA's noise filtering is essential for robust scoring.
Furthermore, full BLS-integrated methods ("+ BLS") achieve accuracies on par with, or slightly exceeding, their original per-sample loss counterparts, while maintaining similar data pruning percentages (P \%). For instance, BLS-SeTa on R50 matches SeTa's accuracy (80.7\% vs. 80.8\%) with comparable pruning ($\sim$40\%).
 This validates BLS's complete mechanism.
Notably, the substantial data pruning achieved (e.g., 30-40\%) directly translates to reduced training time (Time column), establishing Pruned \% as a reliable proxy for computational savings, independent of hardware-specific timing variations.

\begin{figure}[ht]
    \vspace{-10pt}
    \centering
    \includegraphics[width=0.48\textwidth]{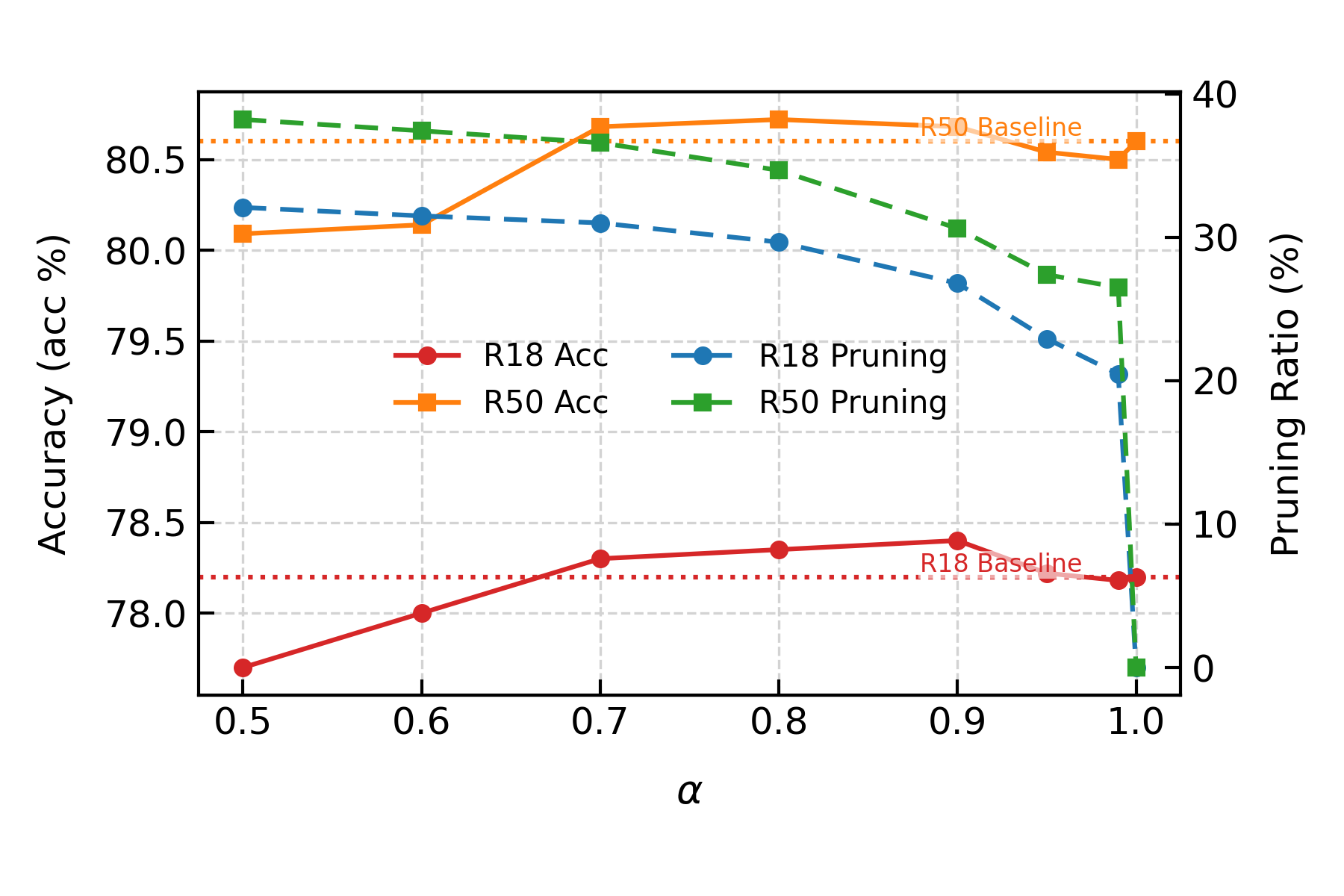} 
    \vspace{-25pt}
    \caption{Effect of EMA decay factor $\alpha$. Accuracy and Pruning Ratio are shown for ResNet18 and ResNet50.
    }
    \label{fig:alpha_sensitivity}
\end{figure}

\textbf{Hyperparameter Analysis.}
Figure~\ref{fig:alpha_sensitivity} illustrates BLS's sensitivity to the EMA decay factor $\alpha$ on CIFAR100 (ResNet18/50).
For $\alpha \in [0.5, 0.6]$, scores are highly responsive, yielding substantial pruning ratios (R18: $\sim$31-33\%, R50: $\sim$35-37\%) but accuracies (R18: $\sim$79.0-79.1\%, R50: $\sim$80.1-80.2\%) are slightly below their respective baselines, likely due to noise sensitivity.
As $\alpha$ increases to the range $[0.7, 0.8]$, a favorable trade-off is observed. ResNet18 accuracy peaks at 79.4\% (e.g., $\alpha=0.7, 0.8$) with a pruning ratio of $\sim$30-31\%. ResNet50 accuracy also peaks at 80.7\% ($\alpha=0.8$) with $\sim$30\% pruning, surpassing its baseline. This region suggests effective noise filtering while maintaining responsiveness.
For $\alpha \in [0.9, 1.0)$, pruning ratios decline sharply for both models as scores become overly smoothed and static, with $\alpha=1.0$ effectively replicating full training (near-zero pruning and baseline accuracy).
This analysis indicates that $\alpha$ values between 0.7 and 0.8 offer a robust balance of high accuracy and significant pruning. Consequently, we default to $\alpha=0.7$ for other experiments.

\section{Conclusion}
\label{sec:conclusion}

This paper introduces the BLS, a practical and efficient method for dynamic data pruning that uniquely infers sample importance solely from readily available mean batch losses, bypassing the complexities of accessing per-sample loss. By applying a sample-specific Exponential Moving Average to the sequence of batch losses encountered by each sample, BLS effectively filters noise arising from stochastic batch composition, as rigorously demonstrated by our signal-processing analysis. Extensive empirical validation across ten diverse tasks, architectures, and learning paradigms confirms that BLS enables significant data pruning while maintaining performance comparable to state-of-the-art methods reliant on direct per-sample losses. Its negligible overhead and implementation simplicity make BLS a broadly applicable tool for enhancing various training tasks.

\subsubsection*{Acknowledgments}
This work was supported by the National Natural Science Foundation of China under Grant 62471394, U21B2041, 62306241 and 62576284.

\bibliographystyle{abbrv}
{
	\small
	\bibliography{ref}
}

\clearpage
\setcounter{page}{1}
\maketitlesupplementary

\begin{algorithm*}[htbp]
    \small
    \caption{BLS Integration: \textbf{\textcolor{red}{One-Line Proxy}} and \textbf{\textcolor{blue}{Three-Line Injection}}}
    \label{alg:bls_integration}
    \begin{algorithmic}[1]
        \State \textbf{Input:} Original dataset $\mathcal{D}_{\text{train}}$, framework args $\theta_{\text{Framework}}$, BLS decay $\alpha$.
        \State \textbf{Output:} Trained model parameters $\mathbf{W}^*$.
        \Statex
        \State $\text{InfoBatchInst} \gets \text{InfoBatch}(\mathcal{D}_{\text{train}}, \theta_{\text{Framework}})$ \Comment{\textcolor{blue}{\#1 Wrap dataset}}
        \State $\boxed{\text{DataHandler} \gets \text{BLS}(\text{InfoBatchInst}, \alpha).\text{proxy}()}$ \Comment{\textcolor{red}{One-Line Proxy: BLS modifies InfoBatch's update}}
        \State Loader $\gets \text{DataLoader}(\text{DataHandler}, \text{sampler}=\text{DataHandler}.\text{sampler}, \dots)$ \Comment{\textcolor{blue}{\#2 Inject sampler}}
        \Statex
        \State Initialize model $\mathcal{M}(\mathbf{W})$, optimizer $\mathcal{O}$
        \State Criterion $\mathcal{C} \gets \text{LossFunction}(\text{reduction="mean"})$ \Comment{BLS uses standard batch loss}
        \For{epoch $\in [1, \dots, N_{\text{epochs}}]$}
            \For{($\bx_{\text{batch}}$, $\by_{\text{batch}}$) \textbf{in} Loader} \Comment{\textit{Indices are set on DataHandler internally by DataLoader hack}}
                \State $\mathcal{O}.\text{zero\_grad}()$
                \State $\batchloss_t \gets \mathcal{C}(\mathcal{M}(\bx_{\text{batch}}), \by_{\text{batch}})$ \Comment{Obtain standard mean batch loss}
                \State $\text{loss}_{\text{final}} \gets \text{DataHandler}.\text{update}(\batchloss_t)$  \Comment{\textcolor{blue}{\#3 Proxied update uses $\batchloss_t$}}
                \Statex \hspace{\algorithmicindent} \Comment{\textit{Internally, performs BLS EMA update on current batch samples}}
                \State $\text{loss}_{\text{final}}.\text{backward}()$
                \State $\mathcal{O}.\text{step}()$
            \EndFor
        \EndFor
        \State \textbf{return} Trained model parameters $\mathbf{W}^*$
    \end{algorithmic}
\end{algorithm*}
\section{Simplicity of BLS: \textit{One-Line Proxy} and Seamless Integration}
\label{sec:bls_simplicity_integration}

A hallmark of BLS is its exceptional ease of integration into existing training workflows, particularly when enhancing dynamic data selection frameworks like InfoBatch \citep{InfoBatch} or SeTa \citep{seta} that were originally designed around per-sample losses ($\loss_i(t)$). BLS achieves this through a conceptual \textbf{\textit{one-line proxy}} mechanism that seamlessly modifies the score generation within such frameworks, coupled with an intrinsically simple update logic. This entirely obviates the need for direct $\loss_i(t)$ computation, a significant practical advantage as shown in Figure~\ref{fig:viz2_flow} and detailed in \cref{appendix:difficulty_per_sample_loss}.

Algorithm~\ref{alg:bls_integration} illustrates this streamlined integration, contextualized with a framework like InfoBatch. The core idea is that BLS wraps the existing data handling object (e.g., an \texttt{InfoBatchInstance}) and, through its conceptual \texttt{.proxy()} method (realized by the BLS class constructor in our implementation), effectively overrides or augments its internal score update mechanism to use the mean batch loss instead of per-sample losses.

The \textbf{\textit{one-line proxy}} is demonstrated at Line~2 of Algorithm~\ref{alg:bls_integration}:
\texttt{DataHandler $\gets$ BLS(InfoBatchInst, $\alpha$).proxy()}.
Here, \texttt{InfoBatchInst} is an initialized instance of a base pruning framework. The \texttt{BLS} wrapper effectively modifies this instance (now referred to as \texttt{DataHandler}) such that its subsequent \texttt{.update()} calls will utilize BLS's scoring logic, which operates on the mean batch loss. This single line conceptually redirects the core of the importance scoring from a per-sample loss basis to a batch loss basis.

The \textbf{\textit{three-line injection}} then highlights the minimal conceptual changes to a standard training loop when using this BLS-enhanced data handler:
\begin{enumerate}
    \itemsep0em
    \item \textbf{Line 3 (Framework Wrapping):} \texttt{InfoBatchInst $\gets$ InfoBatch(...)}. This is the standard initialization of the base pruning framework (e.g., InfoBatch). The subsequent BLS proxy (Line 4) builds upon this.
    \item \textbf{Line 5 (Sampler Integration):} \texttt{Loader $\gets$ DataLoader(DataHandler, sampler=DataHandler.sampler)}. The DataLoader utilizes the custom sampler provided by the \texttt{DataHandler} (the BLS-proxied InfoBatch instance). This sampler now implicitly operates based on scores that will be generated and maintained by BLS.
    \item \textbf{Line 12 (Proxied Update Call):} \texttt{loss\_final $\gets$ DataHandler.update($\batchloss_t$)}. This line, within the training loop, calls the \texttt{update} method. Crucially, due to the BLS proxy, this method now takes the standard mean batch loss $\batchloss_t$ (obtained on Line 11 using a criterion with \texttt{reduction="mean"}) as its primary input for scoring. As noted in the algorithm's comments, the internal logic of this proxied \texttt{update} method then executes BLS's core EMA calculation for samples in the current batch using this $\batchloss_t$. The returned \texttt{loss\_final} might be the original $\batchloss_t$ or a version rescaled by the underlying framework's (e.g., InfoBatch's) mechanism for gradient adjustment.
\end{enumerate}
Thus, BLS leverages the existing infrastructure of methods like InfoBatch/SeTa for sampling and potential gradient adjustments but critically simplifies the score generation by using the universally available mean batch loss. This minimal footprint and reliance only on $\batchloss_t$ underscore BLS's practical utility and ease of adoption for efficient dynamic data pruning.

\section{On the Practical Difficulty of Obtaining Per-Sample Losses}
\label{appendix:difficulty_per_sample_loss}

The main text highlights the practical challenges in obtaining per-sample losses $\loss_i(t)$ for dynamic data selection. This appendix provides a more granular discussion of these difficulties, underscoring the motivation for methods like BLS that operate solely on aggregated batch losses. We first discuss general framework-level hurdles and then delve into task-specific complexities using object detection as a prime example, finally illustrating the differing access complexities with pseudocode.

\begin{figure}
    \centering
    \includegraphics[width=0.5\textwidth]{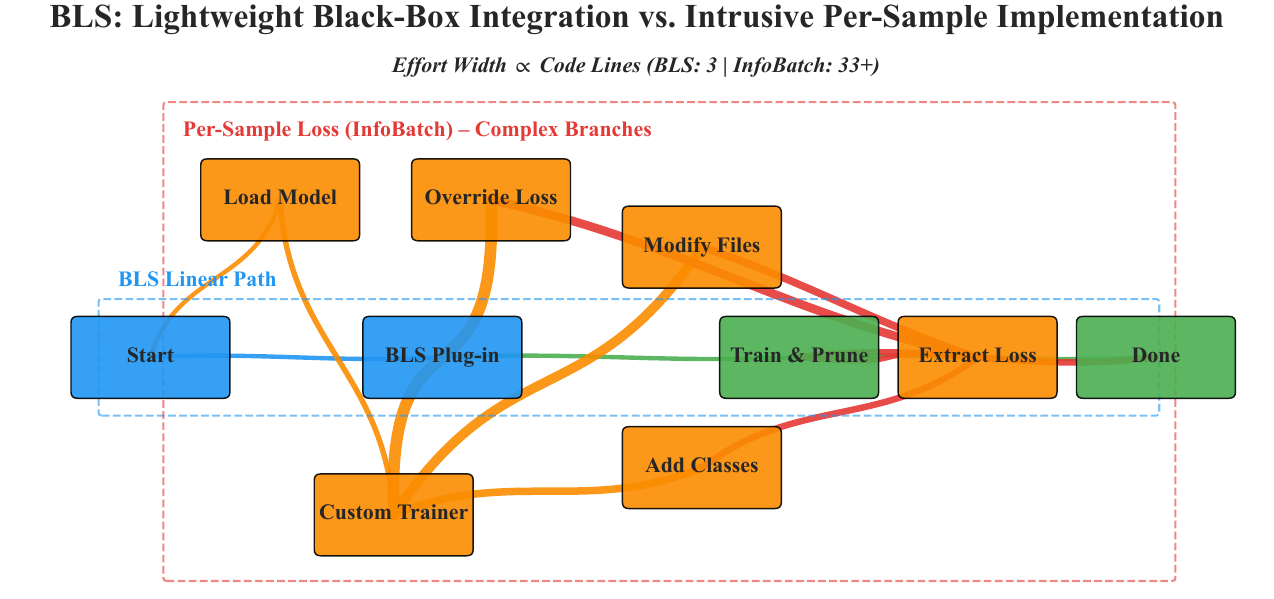}
    \caption{Workflow Comparison: BLS Black-Box Simplicity vs. Per-Sample Loss Complexity.}
    \label{fig:viz2_flow}
\end{figure}

\subsection{Standard Loss Aggregation and Framework Encapsulation}
\label{app:sub:framework_aggregation}

Modern deep learning frameworks, such as PyTorch \citep{paszke2019pytorch}, are architected primarily for computational efficiency. A key aspect of this design is the default aggregation of losses at the batch level. Standard loss function implementations typically perform an immediate reduction (e.g., `reduction='mean'` or `reduction='sum'`), yielding a single scalar batch loss which serves as the primary input for backpropagation. Accessing individual sample losses $\loss_i(t)$ therefore necessitates a deviation from this default, requiring explicit configuration of the loss function (e.g., `reduction='none'`) to return a tensor of per-sample losses (e.g., shape $[B]$ for batch size $B$). While technically feasible for certain loss types, this explicit non-reduction forces modifications to standard training pipelines, as training loops and gradient computation logic are typically designed for scalar losses. Furthermore, high-level training libraries and APIs (e.g., YOLO \citep{yolo}) often introduce further layers of abstraction over the loss computation. Unpacking these abstractions to expose per-sample losses can require intrusive modifications to internal library code, thereby increasing implementation complexity and reducing the portability of such custom solutions.

\subsection{Case Study: Per-Sample Loss in Object Detection}
\label{app:sub:object_detection_loss}

Object detection serves as a salient example where defining and extracting a representative per-sample loss $\loss_i(t)$ is highly non-trivial. For a single input image $i$ in a typical detector (e.g., YOLO \citep{yolo}), the loss computation involves several stages. Initially, numerous region proposals or anchor boxes are generated and matched against ground truth objects. For each matched positive proposal $p$ associated with image $i$, at least two loss components are computed: a classification loss $\loss_{\text{cls}}(p)$ and a bounding box regression loss $\loss_{\text{reg}}(p)$. Background proposals typically only incur a classification loss. The total loss for image $i$, denoted $\loss_i^{\text{task}}$, is then an aggregation, often a weighted sum, of these individual proposal/anchor losses:

\begin{equation}
\begin{split} 
    \loss_i^{\text{task}} = & \sum_{p \in \text{pos\_proposals}_i} (\lambda_1 \loss_{\text{cls}}(p) + \lambda_2 \loss_{\text{reg}}(p)) + \\ 
    & \sum_{p \in \text{neg\_proposals}_i} \lambda_3 \loss_{\text{cls}}(p)_{\text{bg}},
\end{split}
\label{eq:obj_det_loss_sample}
\end{equation}

where $\lambda$ are weighting factors. This $\loss_i^{\text{task}}$ represents the per-sample loss one would ideally want. Finally, the batch loss $\batchloss$ used for gradient updates is the average of these $\loss_i^{\text{task}}$ values over all images $i$ in the batch $\batch_t$.

The primary challenge in obtaining $\loss_i^{\text{task}}$ lies in the encapsulation by object detection frameworks. These frameworks typically integrate the entire multi-stage loss calculation, outputting only the final scalar $\batchloss(\batch_t, t)$. Extracting the intermediate $\loss_i^{\text{task}}$ for each image usually requires modifying these complex, highly optimized internal modules, as a simple `reduction='none'` at the image level is often unavailable or insufficient. Even if $\loss_i^{\text{task}}$ could be extracted, its utility as a single "hardness" indicator is debatable. It aggregates potentially hundreds of sub-losses; an image with one severe localization error might yield a similar $\loss_i^{\text{task}}$ to an image with many minor classification errors, despite representing different types of model failure. Furthermore, the variable number of proposals contributing to $\loss_i^{\text{task}}$ across images complicates direct comparisons without careful normalization. Thus, for complex tasks like object detection, obtaining a meaningful and accessible $\loss_i(t)$ often necessitates deep architectural and task-specific interventions.

\begin{algorithm*}[htbp]
\caption{Illustrative Comparison of Loss Access (Object Detection as an Example)}
\label{alg:loss_access_detailed}
\begin{algorithmic}[1]
\State \textbf{Input:} Model $f(\cdot; \theta)$ (Backbone, RPN, Head), Batch $\batch_t = \{(\mathbf{x}_k, \mathbf{y}_k)\}_{k=1}^B$
\Statex
\Procedure{GetBatchLossStandard}{$\batch_t, f$} \Comment{Standard, Direct Path}
    \State $\text{Features}_{\text{batch}} \leftarrow f_{\text{backbone}}(\mathbf{X}_{\text{batch}})$
    \State $\text{Proposals}_{\text{batch}}, \loss_{\text{RPN}}^{\text{batch}} \leftarrow f_{\text{RPN}}(\text{Features}_{\text{batch}}, \mathbf{Y}_{\text{batch}})$
    \State $\text{Detections}_{\text{batch}}, \loss_{\text{Head}}^{\text{batch}} \leftarrow f_{\text{head}}(\text{Features}_{\text{batch}}, \text{Proposals}_{\text{batch}}, \mathbf{Y}_{\text{batch}})$
    \State $\batchloss \leftarrow (\loss_{\text{RPN}}^{\text{batch}} + \loss_{\text{Head}}^{\text{batch}}) / B$ \Comment{Framework aggregates internally}
    \State \textbf{return} $\batchloss$
\EndProcedure

\Statex
\Procedure{GetPerSampleLossesComplex}{$\mathcal{B}_t, f$} \Comment{Involved Path (e.g., Object Detection)}
    \State $\text{per\_sample\_losses}[1..B] \leftarrow \text{InitializeArray}(B)$
    \For{$k = 1 \textbf{ to } B$} \Comment{Iterate through each image in the batch}
        \State $(\mathbf{x}_k, \mathbf{y}_k) \leftarrow \mathcal{B}_t[k]$
        \State $\text{Features}_k \leftarrow f_{\text{backbone}}(\mathbf{x}_k)$
        \State $\text{Proposals}_k, \loss_{\text{RPN}}(k) \leftarrow f_{\text{RPN}}(\text{Features}_k, \mathbf{y}_k)$ \Comment{Loss for RPN on image $k$}
        \State $\text{SampledProposals}_k \leftarrow \text{FilterAndSampleProposals}(\text{Proposals}_k, \mathbf{y}_k)$
        \State $\loss_{\text{Head}}(k) \leftarrow 0$
        \For{each proposal $p$ in $\text{SampledProposals}_k$}
            \State $\text{RoIFeatures}_p \leftarrow \text{RoIAlign}(\text{Features}_k, p.\text{coords})$
            \State $\text{class\_logits}_p, \text{box\_deltas}_p \leftarrow f_{\text{head\_branches}}(\text{RoIFeatures}_p)$
            \State $\loss_{\text{cls}}(p), \loss_{\text{reg}}(p) \leftarrow \text{ComputeProposalLosses}(\text{class\_logits}_p, \text{box\_deltas}_p, p, \mathbf{y}_k)$
            \State $\loss_{\text{Head}}(k) \leftarrow \loss_{\text{Head}}(k) + \lambda_1 \loss_{\text{cls}}(p) + \lambda_2 \loss_{\text{reg}}(p)$ \Comment{Accumulate head losses}
        \EndFor
        \If{$|\text{SampledProposals}_k| > 0$}
            \State $\loss_{\text{Head}}(k) \leftarrow \loss_{\text{Head}}(k) / |\text{SampledProposals}_k|$ \Comment{Normalize head loss}
        \EndIf
        \State $\text{per\_sample\_losses}[k] \leftarrow \loss_{\text{RPN}}(k) + \loss_{\text{Head}}(k)$ \Comment{Total loss for image $k$}
    \EndFor
    \State \textbf{return} $\text{per\_sample\_losses}$
\EndProcedure
\end{algorithmic}
\end{algorithm*}

\subsection{Illustrative Pseudocode: Accessing Batch vs. Per-Sample Loss}
\label{app:sub:pseudocode}

The difference in accessibility can be further illustrated with simplified pseudocode. Algorithm~\ref{alg:loss_access_detailed} contrasts the typical workflow for obtaining batch loss versus the more involved process for per-sample losses, especially reflecting the complexities of tasks like object detection.

Algorithm~\ref{alg:loss_access_detailed} illustrates that `GetBatchLossStandard` relies on the framework's internal aggregation to produce a single scalar. In contrast, `GetPerSampleLossesComplex` (simulating object detection) requires explicitly iterating through each image, re-implementing or exposing the multi-stage loss calculation (RPN loss, proposal sampling, RoI feature extraction, head losses, and their weighted aggregation) to obtain an individual $\loss_k^{\text{task}}$. This significantly increases complexity compared to merely setting `reduction='none'` in a simpler loss function.

\subsection{Implementation Complexity and Maintainability}
\label{app:sub:implementation}
Beyond direct computational costs or conceptual challenges in defining $\loss_i(t)$, the integration of per-sample loss extraction and its subsequent utilization invariably entails increased engineering effort and long-term maintenance burdens. Altering standard training scripts, which are typically well-tested and optimized for scalar loss workflows, introduces potential for errors and diverges from established community practices. More critically, the logic to correctly define, isolate, or appropriately weigh components to form a meaningful $\loss_i(t)$ is inherently task-dependent and often architecture-specific, as exemplified by object detection (\cref{app:sub:object_detection_loss}) and also noted for other complex scenarios such as certain Semi-Supervised Learning \citep{usb} and YOLO \citep{yolo} frameworks. This necessity for bespoke code for each new application or significant model change curtails generalizability and reduces the portability of such per-sample loss-based techniques across different research codebases or evolving framework versions. Consequently, custom logic for per-sample loss handling adds a considerable maintenance overhead, as changes in underlying libraries or model architectures may necessitate revisiting and adapting this specialized code.

In summary, the path to obtaining and utilizing per-sample losses $\loss_i(t)$ is often laden with practical challenges. These range from navigating framework abstractions and defining meaningful scalars for complex tasks like object detection, to the intricacies of disentangling contributions in interactive loss settings, and the general increase in implementation complexity and maintenance. These cumulative difficulties provide strong motivation for developing alternative sample importance measures, like BLS, that can effectively leverage the universally accessible mean batch loss.

\section{Efficiency Metric Justification.}
\label{appendix:efficiency_details}

\begin{table}[ht]
    \footnotesize
    \setlength{\tabcolsep}{4pt}
    \caption{Computational overhead of BLS with InfoBatch and SeTa (1M samples, NVIDIA RTX 3090 GPU).
    }
    \centering
    \label{tab:efficiency_bls}
    \begin{tabular}{c|ccc}
        \toprule[0.9pt]
        Method & Overhead & \makecell{R18 P/B \\ 734.1s} & \makecell{R50 P/B \\ 2122.4s} \\
        \midrule[0.9pt]
        InfoBatch & 0.236s & 0.03\% & 0.01\% \\
        BLS-InfoBatch & 0.253s & 0.03\% & 0.01\% \\
        \hline
        SeTa & 10.001s & 1.4\% & 0.5\% \\
        BLS-SeTa & 10.021s & 1.4\% & 0.5\% \\
        \bottomrule[0.9pt]
    \end{tabular}
\end{table}

Reproducing wall-clock time reductions for dynamic pruning is notoriously challenging due to hardware and system dependencies. We therefore primarily report the \textbf{percentage of data pruned (Pruned \%)} as our efficiency metric. This choice is validated by the minimal computational overhead. \Cref{tab:efficiency_bls} demonstrates that integrating BLS into existing pruning methods (InfoBatch, SeTa) introduces negligible additional overhead ($<0.02$s for 1M samples) compared to the original methods. More importantly, the total overhead of these BLS-enhanced pruning methods remains extremely low relative to backbone processing time for representative models like ResNet18 and ResNet50.
This minimal overhead ensures compute positivity \citep{DBLP:conf/eccv/EvansPMSTH24,seta}, where the cost of pruning is vastly outweighed by the savings from data reduction (at least $>20$\% in our experiments). Furthermore, for complex models involving multiple components beyond a single backbone (e.g., ViECap with GPT2), the relative pruning overhead would be even smaller. Thus, Pruned \% serves as a robust, hardware-agnostic, and easily comparable indicator of the potential computational savings offered by BLS-guided pruning.

\section{Limitations}
\label{sec:limitations}
While BLS offers significant advantages in simplicity and applicability, its current operational scope has some considerations. Primarily, BLS functions as a scoring mechanism. When integrated into existing dynamic pruning frameworks like InfoBatch \citep{InfoBatch} or SeTa \citep{seta}, it replaces their per-sample loss-based scoring but does not inherently alter their multi-epoch iterative pruning schedules or curriculum designs. Thus, while BLS enables these frameworks to operate without direct per-sample loss access, the overall training process duration remains influenced by the epoch-dependent nature of the specific pruning strategy and BLS itself. 


\end{document}